\newcommand{\ours}[0]{\textsc{DeMipl}}
\newcommand{\oursnoatt}[0]{\textsc{DeMipl-Md}}
\newcommand{\ourspro}[0]{\textsc{DeMipl-Pr}}
\newcommand{\oursavg}[0]{\textsc{DeMipl-Av}}
\newcommand{\miplgp}[0]{\textsc{MiplGp}}
\newcommand{\proden}[0]{\textsc{Proden}}
\newcommand{\rc}[0]{\textsc{Rc}}
\newcommand{\lws}[0]{\textsc{Lws}}
\newcommand{\aggd}[0]{\textsc{Pl-aggd}}
\newcommand\bs{\boldsymbol}
\newcommand\bb{\mathbb} 
\newcommand\al{\mathcal}
\definecolor{steelblue}{RGB}{46, 82, 180}
\theoremstyle{plain}
\newtheorem{theorem}{Theorem}
\title{Disambiguated Attention Embedding for Multi-Instance Partial-Label Learning}
\author{
 Wei Tang$^{1,2}$, \space Weijia Zhang$^3$,  \space Min-Ling Zhang$^{1,2}$\thanks{Corresponding author}
 \\
  $^1$ School of Computer Science and Engineering, Southeast University, Nanjing {\rm 210096}, China\\
  $^2$ Key Laboratory of Computer Network and Information Integration (Southeast University), \\Ministry of Education, China\\
  $^3$ School of Information and Physical Sciences, The University of Newcastle, \\Callaghan, NSW {\rm 2308}, Australia \\
  \texttt{tangw@seu.edu.cn}, \space \texttt{weijia.zhang@newcastle.edu.au}, \space \texttt{zhangml@seu.edu.cn}\\
  }
\begin{document}
\maketitle

\begin{abstract}
In many real-world tasks, the concerned objects can be represented as a multi-instance bag associated with a candidate label set, which consists of one ground-truth label and several false positive labels. Multi-instance partial-label learning (MIPL) is a learning paradigm to deal with such tasks and has achieved favorable performances. Existing MIPL approach follows the instance-space paradigm by assigning augmented candidate label sets of bags to each instance and aggregating bag-level labels from instance-level labels. However, this scheme may be suboptimal as global bag-level information is ignored and the predicted labels of bags are sensitive to predictions of negative instances. In this paper, we study an alternative scheme where a multi-instance bag is embedded into a single vector representation. Accordingly, an intuitive algorithm named {\ours}, i.e., \textit{Disambiguated attention Embedding for Multi-Instance Partial-Label learning}, is proposed. {\ours} employs a disambiguation attention mechanism to aggregate a multi-instance bag into a single vector representation, followed by a momentum-based disambiguation strategy to identify the ground-truth label from the candidate label set. Furthermore, we introduce a real-world MIPL dataset for colorectal cancer classification. Experimental results on benchmark and real-world datasets validate the superiority of {\ours} against the compared MIPL and partial-label learning approaches. 
\end{abstract}

\section{Introduction}
Significant advancements in supervised machine learning algorithms have been achieved by utilizing large amounts of labeled training data. However, in numerous tasks, training data is weakly-supervised due to the substantial costs associated with data labeling \cite{HanYYNXHTS18, BerthelotCGPOR19, IshidaNHS17, WeiXLX22, xie2022partial, LyuFJWLL23}. Weak supervision can be broadly categorized into three types: incomplete, inexact, and inaccurate supervision \cite{zhou2018brief}. Multi-instance learning (MIL) and partial-label learning (PLL) are typical weakly-supervised learning frameworks based on inexact supervision. In MIL, samples are represented by collections of features called bags, where each bag contains multiple instances \cite{amores2013multiple,carbonneau2018multiple,ZhouSL09,wei2019multi,wang2018revisiting,IlseTW18,Shi20,LiLE21,Weijia21,Weijia22,JavedJPTYP22,xiang2023exploring}. Only bag-level labels are accessible during training, while instance-level labels are unknown. Consequently, the instance space in MIL contains inexact supervision, signifying that the number and location of positive instances within a positive bag remain undetermined. PLL associates each instance with a candidate label set that contains one ground-truth label and several false positive labels \cite{JinG02,cour2011learning,liu2012conditional,yu2016maximum,zhang2017disambiguation,GongLTYYT18,YaoDC0W020,YanG20,lyu2019gm,xu2021instance,wang2022contrastive,WuWZ22,wangdb2022,GongYB22,wang2022solar,TianYF23}. As a result, the label space in PLL embodies inexact supervision, indicating that the ground-truth label of an instance is uncertain.

However, inexact supervision can exist in instance and label space simultaneously, i.e., dual inexact supervision \cite{tang2023miplgp}. This phenomenon can be observed in histopathological image classification, where an image is typically partitioned into a multi-instance bag \cite{campanella2019,lu2021data,ShaoBCWZJZ21,zhang2022dtfd}, and labeling ground-truth labels incurs high costs due to the need for specialized expertise. Consequently, utilizing crowd-sourced candidate label sets will significantly reduce the labeling cost \cite{liao2022learning}. For this purpose,  a learning paradigm called multi-instance partial-label learning (MIPL) has been proposed to work with dual inexact supervision. In MIPL, a training sample is represented as a multi-instance bag associated with a bag-level candidate label set, which comprises a ground-truth label along with several false positive labels. It is noteworthy that the multi-instance bag includes at least one instance that is affiliated with the ground-truth label, while none of the instances belong to any of the false positive labels.

Due to the difficulty in handling dual inexact supervision, to the best of our knowledge, {\miplgp} \cite{tang2023miplgp} is the only viable MIPL approach. {\miplgp} learns from MIPL data at the instance-level by utilizing a label augmentation strategy to assign an augmented candidate label set to each instance, and integrating a Dirichlet disambiguation strategy with the Gaussian processes regression model \cite{WangPGTWW19}. Consequently, the learned features of {\miplgp} primarily capture local instance-level information, neglecting global bag-level information. This characteristic renders {\miplgp} susceptible to negative instance predictions when aggregating bag-level labels from instance-level labels. As illustrated in Figure \ref{fig:ill}(a), identical or similar negative instances can simultaneously occur in multiple positive bags with diverse candidate label sets, thereby intensifying the challenge of disambiguation.

In this paper, we overcome the limitations of {\miplgp} by introducing a novel algorithm, named {\ours}, i.e., \textit{Disambiguated attention Embedding for Multi-Instance Partial-Label learning}, based on the embedded-space paradigm. Figure \ref{fig:ill}(b) illustrates that {\ours} aggregates each multi-instance bag into a single vector representation, encompassing all instance-level features within the bag. Furthermore, {\ours} effectively identifies the ground-truth label from the candidate label set.

Our contributions can be summarized as follows: First, we propose a disambiguation attention mechanism for learning attention scores in multi-instance bags. This is in contrast to existing attention-based MIL approaches that are limited to handling classifications with exact bag-level labels \cite{IlseTW18, Shi20, cui2023bayesmil}. Second, we propose an attention loss function that encourages the attention scores of positive instances to approach one, and those of negative instances to approach zero, ensuring consistency between attention scores and unknown instance-level labels. Third, we leverage the multi-class attention scores to map the multi-instance bags into an embedded space, and propose a momentum-based disambiguation strategy to identify the ground-truth labels of the multi-instance bags from the candidate label sets. In addition, we introduce a real-world MIPL dataset for colorectal cancer classification comprising $7000$ images distributed across seven categories. The candidate labels of this dataset are provided by trained crowdsourcing workers.

Experiments are conducted on the benchmark as well as real-world datasets. The experimental results demonstrate that: (a) {\ours} achieves higher classification accuracy on both benchmark and real-world datasets. (b) The attention loss effectively enhances the disambiguation attention mechanism, accurately discerning the significance of positive and negative instances. (c) The momentum-based disambiguation strategy successfully identifies the ground-truth labels from candidate label sets, especially in scenarios with an increasing number of false positive labels.

The remainder of the paper is structured as follows. First, we introduce {\ours} in Section \ref{sec:methodology} and present the experimental results in Section \ref{sec:experiments}. Finally, we conclude the paper in Section \ref{sec:con}.

\begin{figure}[!t]
\centering
\begin{overpic}[width=136mm]{./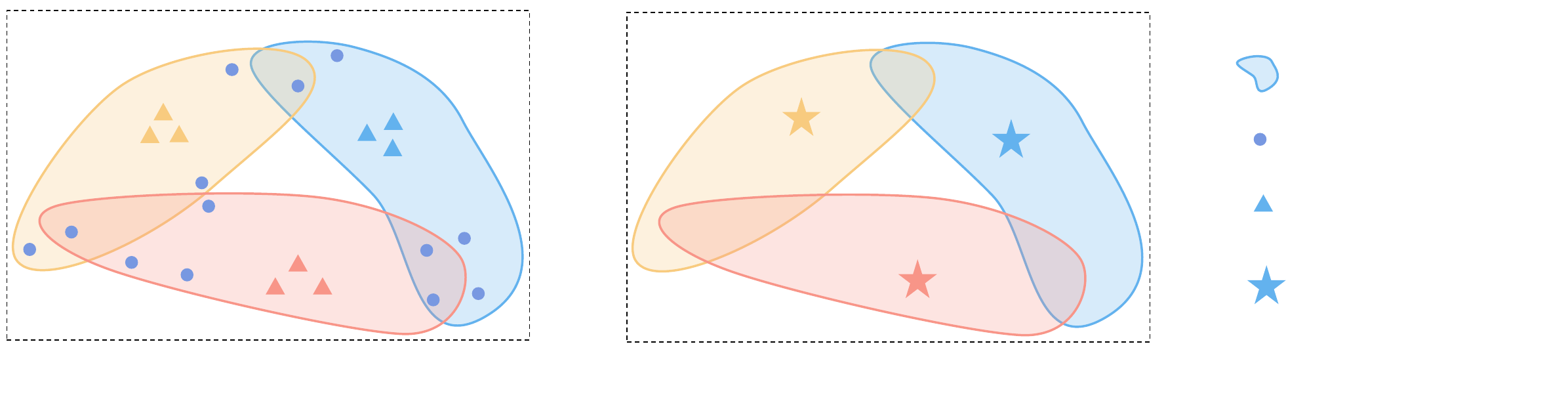}   
\put(212, 234) {\scriptsize $\mathcal{S}_c = \{1,3,{\color{red}{4}}\}$}
\put(10, 234) {\scriptsize $\mathcal{S}_b = \{0,{\color{red}{1}},4\}$}
\put(10, 48) {\scriptsize $\mathcal{S}_a = \{{\color{red}{0}},2,3\}$}
\put(30, 17) {\small (a) Instance-level distribution}
\put(70, -13) {\small of three MIPL samples}
\put(650, 227) {\scriptsize $\widehat{Y}_c = 4$}
\put(420, 227) {\scriptsize $\widehat{Y}_b = 1$}
\put(420, 48) {\scriptsize $\widehat{Y}_a = 0$}
\put(410, 17) {\small (b) Aggregated bag-level features}
\put(415, -13) {\small and predicted labels by {\ours}}
\put(825, 205) {\small multi-instance bags}
\put(830, 165) {\small negative instances}
\put(833, 125) {\small positive instances}
\put(835, 80) {\small features of multi-}
\put(857, 50) {\small instance bags}
\end{overpic}
\caption{A brief illustration of {\ours}, where $\mathcal{S}$ and $\widehat{Y}$ are candidate label sets and predicted labels, respectively. The ground-truth labels are shown in red.}
\label{fig:ill}
\end{figure}
\section{Methodology}
\label{sec:methodology}

\subsection{Notations and Framework of {\ours}}
Let $\mathcal{X} = \mathbb{R}^d$ represent the instance space, and let $\mathcal{Y} = \{l_1, l_2, \cdots, l_k\}$ represent the label space containing $k$ class labels. The objective of MIPL is to derive a classifier $f: 2^{\mathcal{X}} \to \mathcal{Y}$. $\mathcal{D} = \{(\boldsymbol{X}_i, \mathcal{S}_i) \mid 1 \le i \le m\}$ is a training dataset that consists of $m$ bags and their associated candidate label sets. Particularly, $(\boldsymbol{X}_i, \mathcal{S}_i)$ is the $i$-th multi-instance partial-label sample, where $\boldsymbol{X}_i = \{\boldsymbol{x}_{i,1}, \boldsymbol{x}_{i,2}, \cdots, \boldsymbol{x}_{i,n_i}\}$ constitutes a bag with $n_i$ instances, and each instance $\boldsymbol{x}_{i,j} \in \mathcal{X}$ for $\forall j \in \{1, 2, \cdots, n_i\}$. $\mathcal{S}_i \subseteq \mathcal{Y}$ is the candidate label set that conceals the ground-truth label $Y_i$, i.e., $Y_i \in \mathcal{S}_i$. It is worth noting that the ground-truth label is unknown during the training process. Assume the latent instance-level labels within $\boldsymbol{X}_i$ is $\boldsymbol{y}_i = \{y_{i,1}, y_{i,2}, \cdots, y_{i,n_i}\}$, then $\exists y_{i,j} = Y_i$ and $\forall y_{i,j} \notin \mathcal{Y} \setminus \{Y_i\}$ hold. In the context of MIPL, an instance is considered a positive instance if its label is identical to the ground-truth label of the bag; otherwise, it is deemed a negative instance. Moreover, the class labels of negative instances do not belong to the label space.

The framework of the proposed {\ours} is illustrated in Figure \ref{fig:framework}.  It consists of three main steps. First, we extract instances in the multi-instance bag $\boldsymbol{X}_i$ and obtain instance-level feature $\boldsymbol{H}_i$. Next, we employ the disambiguation attention mechanism to integrate the multi-instance bag into a single feature vector $\boldsymbol{z}_i$. Finally, we use a classifier to predict the classification confidences $\boldsymbol{P}_i$ of the multi-instance bag. To enhance classification performance, we introduce two loss functions for model training: the attention loss $\mathcal{L}_a$ and the momentum-based disambiguation loss $\mathcal{L}_m$. During the training process, the attention mechanism and the classifier work collaboratively.

\begin{figure}[!t]
\centering
\begin{overpic}[width=120mm]{./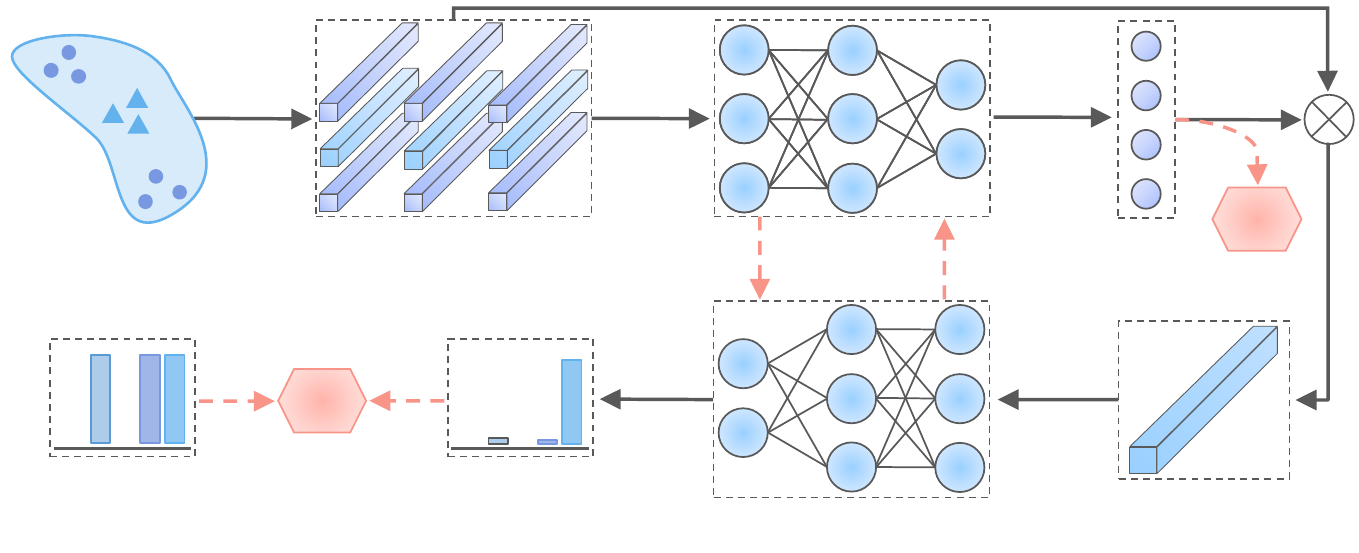}   
\put(10, 220) {\small multi-instance}
\put(50, 193) {\small bag $\boldsymbol{X}_i$}
\put(260, 223) {\small instance-level}
\put(275, 193) {\small feature $\boldsymbol{H}_i$}
\put(584, 220) {\small attention}
\put(570, 193) {\small mechanism}
\put(798, 222) {\small attention}
\put(813, 193) {\small scores}
\put(910, 238) {\small $\mathcal{L}_a$}
\put(10, 35) {\small candidate label}
\put(60, 5) {\small set $\mathcal{S}_i$}
\put(220, 103) {\small $\mathcal{L}_m$}
\put(310, 15) {\small probability $\boldsymbol{P}_i$}
\put(582, 15) {\small classifier}
\put(840, 30) {\small bag-level}
\put(838, 3) {\small feature $\boldsymbol{z}_i$}
\end{overpic}
\caption{The framework of {\ours}, where $\mathcal{L}_a$ and $\mathcal{L}_m$ are the attention loss and momentum-based disambiguation loss, respectively.}
\label{fig:framework}
\end{figure}

\subsection{Disambiguation Attention Mechanism}
Based on the embedded-space paradigm, a key component of {\ours} is the disambiguation attention mechanism. The attention mechanisms are common models \cite{guo2111attention,WangPWW22,ChenYO22}, which can calculate attention scores to determine the contribution of each instance to the multi-instance bag \cite{IlseTW18,Shi20}. The attention scores are then utilized to aggregate the instance-level features into a single vector representation.

For a multi-instance bag $\boldsymbol{X}_i = \{\boldsymbol{x}_{i,1}, \boldsymbol{x}_{i,2}, \cdots, \boldsymbol{x}_{i,n_i}\}$, we employ a neural network-based function parameterized by $h$ to extract its feature information:
\begin{equation}
\label{eq:extractor}
	\boldsymbol{H}_i = h(\boldsymbol{X}_i) = \{\boldsymbol{h}_{i,1}, \boldsymbol{h}_{i,2}, \cdots, \boldsymbol{h}_{i,n_i}\},
\end{equation}
where $\boldsymbol{h}_{i,j} = h(\boldsymbol{x}_{i,j}) \in \mathbb{R}^{d^\prime}$ is the feature of the $j$-th instance within $i$-th bag. For the MIPL problems, we propose a multi-class attention mechanism. First, we calculate the relevance of each instance to all classes, and then transform the relevance into the contribution of each instance to the bag-level feature by a learnable linear model. The attention score $a_{i,j}$ of $\boldsymbol{x}_{i,j}$ is calculated as follows:
\begin{equation}
\label{eq:attention}
	a_{i,j} = \frac{1}{1 + \exp \left\lbrace - \boldsymbol{W}^\top \left(\text{tanh}\left(\boldsymbol{W}_{v}^\top \boldsymbol{h}_{i,j} + \boldsymbol{b}_v \right) \odot \text{sigm}\left(\boldsymbol{W}_{u}^\top \boldsymbol{h}_{i,j} + \boldsymbol{b}_u \right)\right) \right\rbrace},
\end{equation}
where $\boldsymbol{W}^\top \in \mathbb{R}^{1 \times k}$, $\boldsymbol{W}_{v}^\top,~\boldsymbol{W}_{u}^\top \in \mathbb{R}^{k \times d^\prime}$, and $\boldsymbol{b}_{v},~\boldsymbol{b}_{u} \in \mathbb{R}^{k}$ are parameters of linear models. $\text{tanh}(\cdot)$ and $\text{sigm}(\cdot)$ are the hyperbolic tangent and sigmoid functions to generate non-linear outputs for the models, respectively. $\odot$ represents an element-wise multiplication. Consequently, the bag-level feature is aggregated by weighted sums of instance-level features:
\begin{equation}
\label{eq:aggregation}
	\boldsymbol{z}_i = \frac{1}{\sum_{j=1}^{n_i} a_{i,j}} \sum_{j=1}^{n_i} a_{i,j} \boldsymbol{h}_{i,j},
\end{equation}
where $\boldsymbol{z}_i$ is the bag-level feature of $\boldsymbol{X}_i$. To ensure that the aggregated features accurately represent the multi-instance bag, it is necessary to maintain the consistency between attention scores and instance-level labels, that is, the attention scores of positive instances should be significantly higher than those of negative instances. To achieve this, the proposed attention loss is shown below:
\begin{equation}
\label{eq:attention_loss}
	\mathcal{L}_a = - \frac{1}{m} \sum_{i=1}^m \sum_{j=1}^{n_i} a_{i,j} \log a_{i,j}.
\end{equation}
Different from existing attention-based MIL approaches where most of them can only handle binary classification, {\ours} produces multi-class attention scores using Equation (\ref{eq:attention}). Furthermore, unlike loss-based attention \cite{Shi20} that extends binary attention score to multi-class using a naive softmax function, {\ours} utilizes a learnable model with the attention loss to encourage attention scores of negative and positive instances to approach $0$ and $1$, respectively. The ambiguity in attention scores is reduced since the differences in attention scores between positive and negative instances are amplified. As a result, the disambiguated attention scores can make bag-level vector representations discriminative, thereby enabling the classifier to accurately identify ground-truth labels.

\subsection{Momentum-based Disambiguation Strategy}
After obtaining the bag-level feature, the goal is to accurately identify the ground-truth label from the candidate label set. Therefore, we propose a novel disambiguation strategy, namely using the momentum-based disambiguation loss to compute the weighted sum of losses for each category. Specifically, the proposed momentum-based disambiguation loss is defined as follows: 
\begin{equation}
\label{eq:md_loss}
	\mathcal{L}_m = \frac{1}{m} \sum_{i=1}^m \sum_{c=1}^k w_{i,c}^{(t)} \ell \left(f_c^{(t)} (\boldsymbol{z}_i^{(t)}), \mathcal{S}_i \right), 
\end{equation}
where $(t)$ refers to the $t$-th epoch. $\boldsymbol{z}_i^{(t)}$ is the bag-level feature of multi-instance bag $\boldsymbol{X}_i$ and $f_c^{(t)}(\cdot)$ is the model output on the $c$-th class at the $t$-th epoch. $\ell(\cdot)$ is the cross-entropy loss, and $w_{i,c}^{(t)}$ weights the loss value on the $c$-th class at the $t$-th epoch.

Following the principle of the identification-based disambiguation strategy \cite{LvXF0GS20}, the label with the minimal loss value on the candidate label set can be considered the ground-truth label. We aim to assign a weight of $1$ to the single ground-truth label and a weight of $0$ to the rest of the candidate labels. However, the ground-truth label is unknown during the training process. To overcome this issue, we allocate weights based on the magnitude of class probabilities, ensuring that larger class probabilities are associated with higher weights. Specifically, we initialize the weights by:
\begin{equation}
\label{eq:init_w}
	w_{i,c}^{(0)} = \left\{\begin{array}{cc}
	\frac{1}{\left|\mathcal{S}_i\right|}  & \text {if } Y_{i,c} \in \mathcal{S}_i,  \\
	0 & \text {otherwise,}
\end{array}\right.
\end{equation}
where $\frac{1}{\left|\mathcal{S}_i\right|}$ is the cardinality of the candidate label set $\mathcal{S}_i$. 
The weights are updated as follows:
\begin{equation}
\label{eq:update_w}
	w_{i,c}^{(t)} = \left\{\begin{array}{cc}
	\lambda^{(t)} w_{i,c}^{(t-1)}  + (1-\lambda^{(t)}) \frac{f_c^{(t)}(\boldsymbol{z}_i^{(t)})}{\sum_{j \in \bs{S}_i} f_j^{(t)}(\boldsymbol{z}_j^{(t)})}   & \text {if } Y_{i,c} \in \mathcal{S}_i,  \\
	0 & \text {otherwise,}
\end{array}\right.
\end{equation}
where the momentum parameter $\lambda^{(t)} = \frac{T-t}{T}$ is a trade-off between the weights at the last epoch and the outputs at the current epoch. $T$ is the maximum training epoch. 

It is worth noting that the momentum-based disambiguation strategy is a general form of the progressive disambiguation strategy. Specifically, when $\lambda^{(t)}=0$, the momentum-based disambiguation strategy degenerates into the progressive disambiguation strategy \cite{LvXF0GS20}. When $\lambda^{(t)}=1$, the momentum-based disambiguation strategy degenerates into the averaging-based disambiguation strategy \cite{cour2011learning}, which equally treats every candidate label. 

\subsection{Synergy between Attention Mechanism and Disambiguation Strategy}
Combining the attention loss and disambiguation loss, the full loss function is derived as follows:
\begin{equation}
\label{eq:full_loss}
 	\mathcal{L} = \mathcal{L}_m + \lambda_a \mathcal{L}_a,
\end{equation}
where $\lambda_a$ serves as a constant weight for the attention loss. In each iteration, the disambiguation attention mechanism aggregates a discriminative vector representation for each multi-instance bag. Subsequently, the momentum-based disambiguation strategy takes that feature as input and yields the disambiguated candidate label set, i.e., class probabilities. Meanwhile, the attention mechanism relies on the disambiguated candidate label set to derive attention scores. Thus, the disambiguation attention mechanism and the momentum-based disambiguation strategy work collaboratively.

\section{Experiments}
\label{sec:experiments}
\begin{table}[!b] 
\setlength{\abovecaptionskip}{-0.3cm} 
\setlength{\belowcaptionskip}{0.05cm} 
\centering \scriptsize 
 \caption{Characteristics of the Benchmark and Real-World MIPL Datasets.}
 \label{tab:datasets}
\begin{tabular}{p{2.8cm} | p{2.5 cm} p{2.5 cm}  p{1.9 cm} p{1.9 cm} p{1.9 cm} p{2.5 cm} p{1.9 cm}}
\hline \hline
	\multicolumn{1}{l|}{Dataset}	& \multicolumn{1}{c}{\#bags}		& \multicolumn{1}{c}{\#ins}	& \multicolumn{1}{c}{\#dim}	& \multicolumn{1}{c}{avg. \#ins}			& \multicolumn{1}{c}{\#class} 	& \multicolumn{1}{c}{avg. \#CLs }	& \multicolumn{1}{c}{domain}	\\ 	\hline
	MNIST-{\scriptsize{MIPL}}		& \multicolumn{1}{c}{500}			& \multicolumn{1}{c}{20664}	& \multicolumn{1}{c}{784}		& \multicolumn{1}{c}{41.33}			& \multicolumn{1}{c}{5}		& \multicolumn{1}{c}{{2, 3, 4}}		& \multicolumn{1}{c}{image}	\\ 
	FMNIST-{\scriptsize{MIPL}}	& \multicolumn{1}{c}{500}			& \multicolumn{1}{c}{20810}	& \multicolumn{1}{c}{784}		& \multicolumn{1}{c}{41.62}			& \multicolumn{1}{c}{5}		& \multicolumn{1}{c}{{2, 3, 4}}		& \multicolumn{1}{c}{image}	\\ 
	Birdsong-{\scriptsize{MIPL}}	& \multicolumn{1}{c}{1300}		& \multicolumn{1}{c}{48425}	& \multicolumn{1}{c}{38}		& \multicolumn{1}{c}{37.25}			& \multicolumn{1}{c}{13}		& \multicolumn{1}{c}{{2, 3, 4, 5, 6, 7}}& \multicolumn{1}{l}{biology}	\\ 
	SIVAL-{\scriptsize{MIPL}}		& \multicolumn{1}{c}{1500}		& \multicolumn{1}{c}{47414}	& \multicolumn{1}{c}{30}		& \multicolumn{1}{c}{31.61}			& \multicolumn{1}{c}{25}		& \multicolumn{1}{c}{{2, 3, 4}}		& \multicolumn{1}{c}{image}	\\     	\hline 
	 CRC-{\scriptsize{MIPL-Row}}	& \multicolumn{1}{c}{7000}		& \multicolumn{1}{c}{56000}	& \multicolumn{1}{c}{9}		& \multicolumn{1}{c}{8}				& \multicolumn{1}{c}{7}		& \multicolumn{1}{c}{2.08}			& \multicolumn{1}{c}{image}	\\
	CRC-{\scriptsize{MIPL-SBN}}	& \multicolumn{1}{c}{7000}		& \multicolumn{1}{c}{63000}	& \multicolumn{1}{c}{15}		& \multicolumn{1}{c}{9}				& \multicolumn{1}{c}{7}		& \multicolumn{1}{c}{2.08}			& \multicolumn{1}{c}{image}	\\
	CRC-{\scriptsize{MIPL-KMeansSeg}}& \multicolumn{1}{c}{7000}	& \multicolumn{1}{c}{30178}	& \multicolumn{1}{c}{6}		& \multicolumn{1}{c}{4.311}			& \multicolumn{1}{c}{7}		& \multicolumn{1}{c}{2.08}			& \multicolumn{1}{c}{image}	\\
	CRC-{\scriptsize{MIPL-SIFT}}	& \multicolumn{1}{c}{7000}		& \multicolumn{1}{c}{175000}	& \multicolumn{1}{c}{128}		&\multicolumn{1}{c}{25}			& \multicolumn{1}{c}{7}		& \multicolumn{1}{c}{2.08}			& \multicolumn{1}{c}{image}	\\
	  \hline \hline
  \end{tabular}
\end{table}

\subsection{Experimental Setup}
\paragraph{Benchmark Datasets} We utilize four benchmark MIPL datasets stemming from {\miplgp} literature \cite{tang2023miplgp}, i.e., MNIST-{\scriptsize{MIPL}}, FMNIST-{\scriptsize{MIPL}}, Birdsong-{\scriptsize{MIPL}}, and SIVAL-{\scriptsize{MIPL}} from domains of image and biology \cite{lecun1998gradient, han1708, briggs2012rank, SettlesCR07}. Table \ref{tab:datasets} summarizes the characteristics of both the benchmark and real-world datasets. We use \emph{\#bags}, \emph{\#ins}, \emph{\#dim}, \emph{avg. \#ins}, \emph{\#class}, and \emph{avg. \#CLs} to denote the number of bags, number of instances, dimension of each instance, average number of instances in all bags, number of class labels, and the average size of candidate label set in each dataset. 

\paragraph{Real-World Dataset} We introduce CRC-{\scriptsize{MIPL}}, the first real-world MIPL dataset for colorectal cancer classification (CRC). It comprises $7000$ hematoxylin and eosin (H\&E) staining images taken from colorectal cancer and normal tissues. Each image has dimensions of $224 \times 224$ pixels and is categorized into one of the seven classes based on the tissue cell types. CRC-{\scriptsize{MIPL}} is derived from a larger dataset used for colorectal cancer classification, which originally contains $100000$ images with nine classes \cite{kather2019}. The adipose and background classes exhibit significant dissimilarities compared to the other categories. Therefore, we choose the remaining seven classes to sample $1000$ images per class. These classes include debris, lymphocytes, mucus, smooth muscle, normal colon mucosa, cancer-associated stroma, and colorectal adenocarcinoma epithelium.

We employ four image bag generators \cite{WeiZ16}: Row \cite{MaronR98}, single blob with neighbors (SBN) \cite{MaronR98}, k-means segmentation (KMeansSeg) \cite{ZhangGYF02}, and scale-invariant feature transform (SIFT) \cite{Lowe04}, to obtain a bag of instances from each image, respectively. The candidate label sets of CRC-{\scriptsize{MIPL}} are provided by three crowdsourcing workers without expert pathologists. Each of the workers annotates all $7000$ images, and each worker assigned candidate labels with non-zero probabilities to form a label set per image. A higher probability indicates a higher likelihood of being the ground-truth label, while a probability of zero implies the label is a non-candidate label. After obtaining three label sets for each image, we distill a final candidate label set as follows. A label present in two or three label sets is selected as a member of the final candidate label set. If the final candidate label set consists of only one or no label, we pick the labels corresponding to the highest probability in each label set. The average length of the final candidate label set per image is $2.08$. More detailed information on the MIPL datasets can be found in the Appendix.

\paragraph{Compared Algorithms}
For comparative studies, we consider one MIPL algorithm {\miplgp} \cite{tang2023miplgp} and four PLL algorithms, containing one feature-aware disambiguation algorithm {\aggd} \cite{wangdb2022} and three deep learning-based algorithms, namely {\proden} \cite{LvXF0GS20}, {\rc} \cite{FengL0X0G0S20}, and {\lws} \cite{WenCHL0L21}. 

\paragraph{Implementation}
{\ours} is implemented using PyTorch \cite{Paszke19} on a single Nvidia Tesla V100 GPU. We employ the stochastic gradient descent (SGD) optimizer with a momentum of $0.9$ and weight decay of $0.0001$. The initial learning rate is chosen from a set of $\{0.01, 0.05\}$ and is decayed using a cosine annealing method \cite{LoshchilovH17}. The number of epochs is set to $200$ for the SIVAL-{\scriptsize{MIPL}} and CRC-{\scriptsize{MIPL}} datasets, and $100$ for the remaining three datasets. The value of $\lambda_a$ is selected from a set of $\{0.0001, 0.001\}$. For the MNIST-{\scriptsize{MIPL}} and FMNIST-{\scriptsize{MIPL}} datasets, we utilize a two-layer CNN in work of \citet{IlseTW18} as a feature extraction network, whereas for the remaining datasets, no feature extraction network is employed. To ensure the reliability of the results, we conduct ten runs of random train/test splits with a ratio of $7:3$ for all datasets. The mean accuracies and standard deviations are recorded for each algorithm. Subsequently, we perform pairwise t-tests at a significance level of $0.05$. 

To map MIPL data into PLL data, we employ the Mean strategy and the MaxMin strategy as described in the {\miplgp} literature \cite{tang2023miplgp}. The Mean strategy calculates the average values of each feature dimension for all instances within a multi-instance bag to yield a bag-level vector representation. The MaxMin strategy involves computing the maximum and minimum values of each feature dimension for all instances within a multi-instance bag and concatenating them to construct bag-level features. In the subsequent section, we report the results of {\proden}, {\rc}, and {\lws} using linear models, while the results with multi-layer perceptrons are provided in the Appendix.

\subsection{Experimental Results on the Benchmark Datasets}
\label{subsec:res_benchmark}
Table \ref{tab:res_1} presents the classification results achieved by {\ours} and the compared algorithms on the benchmark datasets with varying numbers of false positive labels $r$. Compared to {\miplgp}, {\ours} demonstrates superior performance in $8$ out of $12$ cases, with no significant difference observed in the remaining $1$ out of $12$ cases. {\miplgp} performs better than {\ours} on the SIVAL-{\scriptsize{MIPL}} dataset, primarily due to the unique characteristics of the SIVAL-{\scriptsize{MIPL}} dataset. The dataset encompasses $25$ highly diverse categories, such as apples, medals, books, and shoes, resulting in distinctive and discriminative features within multi-instance bags. Each instance's feature includes color and texture information derived from the instance itself as well as its four cardinal neighbors, which enhances the distinctiveness of instance-level features. This suggests that instances with similar features are rarely found across different multi-instance bags. Therefore, by following the instance-space paradigm, {\miplgp} effectively leverages these distinctive attributes of the dataset.

Compared to PLL algorithms, {\ours} outperforms them on all benchmark datasets. This superiority can be attributed to two main factors. First, PLL algorithms cannot directly handle multi-instance bags, whereas the original multi-instance features possess better discriminative power than the degenerated features obtained through the Mean and MaxMin strategies. Second, the proposed momentum-based disambiguation strategy is more robust than the disambiguation strategies of the compared algorithms. It should be noted that although these PLL algorithms achieve satisfactory results in PLL tasks, their performance in addressing MIPL problems is inferior to dedicated MIPL algorithms, namely {\ours} and {\miplgp}. This observation emphasizes the greater challenges posed by MIPL problems, which involve increased ambiguity in supervision compared to PLL problems, and highlights the necessity of developing specialized algorithms for MIPL.
\begin{table}[!t]
\setlength{\abovecaptionskip}{-0cm} 
\setlength{\belowcaptionskip}{0.05cm} 
\centering  \footnotesize  
\caption{Classification accuracy (mean$\pm$std) of each comparing algorithm on the benchmark datasets in terms of the different number of false positive labels [$r \in \{1,2,3\}$]. $\bullet/\circ$ indicates whether the performance of {\ours} is statistically superior/inferior to the compared algorithm on each dataset.} 
\label{tab:res_1}
    \begin{tabular}{l|c|cccc}
    \hline  \hline
    	\multicolumn{1}{l|}{Algorithm} & $r$ & MNIST-{\scriptsize{MIPL}} & FMNIST-{\scriptsize{MIPL}} & Birdsong-{\scriptsize{MIPL}} & SIVAL-{\scriptsize{MIPL}} \\	\hline
    	\multicolumn{1}{l|}{\multirow{3}[1]{*}{{\ours}}}
    	& 1		& 0.976$\pm$0.008 			& 0.881$\pm$0.021 			& 0.744$\pm$0.016 			& 0.635$\pm$0.041 \\
        	& 2     	& 0.943$\pm$0.027 			& 0.823$\pm$0.028 			& 0.701$\pm$0.024 			& 0.554$\pm$0.051 \\
       	& 3     	& 0.709$\pm$0.088 			& 0.657$\pm$0.025 			& 0.696$\pm$0.024 			& 0.503$\pm$0.018 \\	\hline
   	\multicolumn{1}{l|}{\multirow{3}[1]{*}{{\miplgp}}} 
	& 1		& 0.949$\pm$0.016$\bullet$	& 0.847$\pm$0.030$\bullet$	& 0.716$\pm$0.026$\bullet$	& 0.669$\pm$0.019$\circ$ \\
	& 2		& 0.817$\pm$0.030$\bullet$ 	& 0.791$\pm$0.027$\bullet$ 	& 0.672$\pm$0.015$\bullet$ 	& 0.613$\pm$0.026$\circ$ \\
	& 3		& 0.621$\pm$0.064$\bullet$ 	& 0.670$\pm$0.052 		  	& 0.625$\pm$0.015$\bullet$ 	& 0.569$\pm$0.032$\circ$ \\	\hline
    	\multicolumn{6}{c}{Mean} \\	\hline
    	\multicolumn{1}{l|}{\multirow{3}[1]{*}{{\proden}}}  
	& 1     	& 0.605$\pm$0.023$\bullet$  	& 0.697$\pm$0.042$\bullet$  	& 0.296$\pm$0.014$\bullet$  	& 0.219$\pm$0.014$\bullet$  \\
       	& 2     	& 0.481$\pm$0.036$\bullet$  	& 0.573$\pm$0.026$\bullet$  	& 0.272$\pm$0.019$\bullet$  	& 0.184$\pm$0.014$\bullet$  \\
        	& 3     	& 0.283$\pm$0.028$\bullet$  	& 0.345$\pm$0.027$\bullet$  	& 0.211$\pm$0.013$\bullet$  	& 0.166$\pm$0.017$\bullet$  \\	\hline
    	\multicolumn{1}{l|}{\multirow{3}[1]{*}{{\rc}}} 
	& 1     	& 0.658$\pm$0.031$\bullet$  	& 0.753$\pm$0.042$\bullet$  	& 0.362$\pm$0.015$\bullet$  	& 0.279$\pm$0.011$\bullet$  \\
	& 2     	& 0.598$\pm$0.033$\bullet$  	& 0.649$\pm$0.028$\bullet$ 	& 0.335$\pm$0.011$\bullet$  	& 0.258$\pm$0.017$\bullet$  \\
	& 3     	& 0.392$\pm$0.033$\bullet$  	& 0.401$\pm$0.063$\bullet$  	& 0.298$\pm$0.009$\bullet$  	& 0.237$\pm$0.020$\bullet$  \\	\hline
	\multicolumn{1}{l|}{\multirow{3}[1]{*}{{\lws}}} 
	& 1     	& 0.463$\pm$0.048$\bullet$  	& 0.726$\pm$0.031$\bullet$  	& 0.265$\pm$0.010$\bullet$  	& 0.240$\pm$0.014$\bullet$  \\
	& 2     	& 0.209$\pm$0.028$\bullet$  	& 0.720$\pm$0.025$\bullet$  	& 0.254$\pm$0.010$\bullet$  	& 0.223$\pm$0.008$\bullet$  \\
	& 3     	& 0.205$\pm$0.013$\bullet$  	& 0.579$\pm$0.041$\bullet$  	& 0.237$\pm$0.005$\bullet$  	& 0.194$\pm$0.026$\bullet$  \\	\hline
    	\multicolumn{1}{l|}{\multirow{3}[1]{*}{{\aggd}}} 
	& 1     	& 0.671$\pm$0.027$\bullet$  	& 0.743$\pm$0.026$\bullet$  	& 0.353$\pm$0.019$\bullet$  	& 0.355$\pm$0.015$\bullet$  \\
	& 2     	& 0.595$\pm$0.036$\bullet$  	& 0.677$\pm$0.028$\bullet$  	& 0.314$\pm$0.018$\bullet$  	& 0.315$\pm$0.019$\bullet$  \\
	& 3     	& 0.380$\pm$0.032$\bullet$  	& 0.474$\pm$0.057$\bullet$  	& 0.296$\pm$0.015$\bullet$  	& 0.286$\pm$0.018$\bullet$  \\	\hline
    	\multicolumn{6}{c}{MaxMin} \\	\hline
   	 \multicolumn{1}{l|}{\multirow{3}[1]{*}{{\proden}}}  
	& 1     	& 0.508$\pm$0.024$\bullet$  	& 0.424$\pm$0.045$\bullet$  	& 0.387$\pm$0.014$\bullet$  	& 0.316$\pm$0.019$\bullet$  \\
	& 2     	& 0.400$\pm$0.037$\bullet$  	& 0.377$\pm$0.040$\bullet$  	& 0.357$\pm$0.012$\bullet$  	& 0.287$\pm$0.024$\bullet$  \\
	& 3     	& 0.345$\pm$0.048$\bullet$  	& 0.309$\pm$0.058$\bullet$  	& 0.336$\pm$0.012$\bullet$  	& 0.250$\pm$0.018$\bullet$  \\	\hline
    	\multicolumn{1}{l|}{\multirow{3}[1]{*}{{\rc}}} 
	& 1     	& 0.519$\pm$0.028$\bullet$  	& 0.731$\pm$0.027$\bullet$  	& 0.390$\pm$0.014$\bullet$  	& 0.306$\pm$0.023$\bullet$  \\
	& 2     	& 0.469$\pm$0.035$\bullet$  	& 0.666$\pm$0.027$\bullet$  	& 0.371$\pm$0.013$\bullet$  	& 0.288$\pm$0.021$\bullet$  \\
	& 3     	& 0.380$\pm$0.048$\bullet$  	& 0.524$\pm$0.034$\bullet$  	& 0.363$\pm$0.010$\bullet$  	& 0.267$\pm$0.020$\bullet$  \\	\hline
   	\multicolumn{1}{l|}{\multirow{3}[1]{*}{{\lws}}} 
	& 1     	& 0.242$\pm$0.042$\bullet$  	& 0.435$\pm$0.049$\bullet$  	& 0.225$\pm$0.038$\bullet$  	& 0.289$\pm$0.017$\bullet$  \\
	& 2     	& 0.239$\pm$0.048$\bullet$  	& 0.406$\pm$0.040$\bullet$	& 0.207$\pm$0.034$\bullet$  	& 0.271$\pm$0.014$\bullet$  \\
	& 3     	& 0.218$\pm$0.017$\bullet$ 	& 0.318$\pm$0.064$\bullet$	& 0.216$\pm$0.029$\bullet$  	& 0.244$\pm$0.023$\bullet$  \\	\hline
    	\multicolumn{1}{l|}{\multirow{3}[1]{*}{{\aggd}}} 
	& 1     	& 0.527$\pm$0.035$\bullet$  	& 0.391$\pm$0.040$\bullet$  	& 0.383$\pm$0.014$\bullet$  	& 0.397$\pm$0.028$\bullet$  \\
	& 2     	& 0.439$\pm$0.020$\bullet$  	& 0.371$\pm$0.037$\bullet$  	& 0.372$\pm$0.020$\bullet$  	& 0.360$\pm$0.029$\bullet$  \\
	& 3     	& 0.321$\pm$0.043$\bullet$  	& 0.327$\pm$0.028$\bullet$  	& 0.344$\pm$0.011$\bullet$  	& 0.328$\pm$0.023$\bullet$  \\
     \hline  \hline
    \end{tabular}
\end{table}

Additionally, we experiment with another extension of applying PLL algorithms to MIPL data by directly assigning a bag-level candidate label set as the candidate label set for each instance within the bag. However, all of them perform worse than MIPL. Moreover, the majority of the compared PLL algorithms fail to produce satisfactory results. This is likely caused by the fact that the ground-truth labels for most instances are absent from their respective candidate label sets. Consequently, the absences of ground-truth labels impede the disambiguation ability of MIPL algorithms.

\subsection{Experimental Results on the Real-World Dataset}
\label{subsec:res_real}
The classification accuracy of {\ours} and the compared algorithms on the CRC-{\scriptsize{MIPL}} dataset is presented in Table \ref{tab:res_2}, where the symbol -- indicates that {\miplgp} encounters memory overflow issues on our V100 GPUs. {\ours} demonstrates superior performance compared to {\miplgp} on the CRC-{\scriptsize{MIPL-SBN}} and CRC-{\scriptsize{MIPL-KMeansSeg}} datasets, while only falling behind {\miplgp} on the CRC-{\scriptsize{MIPL-Row}} dataset. When compared to the PLL algorithms, {\ours} achieves better results in $28$ out of $32$ cases, and only underperforms against {\aggd} in $2$ cases on CRC-{\scriptsize{MIPL-Row}} and CRC-{\scriptsize{MIPL-SBN}}. 
\begin{table}[!t]
\setlength{\abovecaptionskip}{-0cm} 
\setlength{\belowcaptionskip}{0.05cm} 
\centering   \footnotesize  
\caption{Classification accuracy (mean$\pm$std) of each comparing algorithm on the CRC-{\scriptsize{MIPL}} dataset.} 
\label{tab:res_2}
    \begin{tabular}{lcccc}
    \hline  \hline
    	\multicolumn{1}{l|}{Algorithm}  & CRC-{\scriptsize{MIPL-Row}}   & CRC-{\scriptsize{MIPL-SBN}}   & CRC-{\scriptsize{MIPL-KMeansSeg}}  & CRC-{\scriptsize{MIPL-SIFT}} \\	\hline
    	\multicolumn{1}{l|}{{\ours}}  	& 0.408$\pm$0.010 			& 0.486$\pm$0.014 			& 0.521$\pm$0.012 			& 0.532$\pm$0.013 \\
    	\multicolumn{1}{l|}{{\miplgp}}  	& 0.432$\pm$0.005$\circ$ 	& 0.335$\pm$0.006$\bullet$ 	& 0.329$\pm$0.012$\bullet$  	& -- \\ \hline  
    	\multicolumn{5}{c}{Mean} \\ \hline
    	\multicolumn{1}{l|}{{\proden}} 	& 0.365$\pm$0.009$\bullet$ 	& 0.392$\pm$0.008$\bullet$ 	& 0.233$\pm$0.018$\bullet$ 	& 0.334$\pm$0.029$\bullet$ \\
    	\multicolumn{1}{l|}{{\rc}}    		& 0.214$\pm$0.011$\bullet$ 	& 0.242$\pm$0.012$\bullet$ 	& 0.226$\pm$0.009$\bullet$	& 0.209$\pm$0.007$\bullet$ \\
    	\multicolumn{1}{l|}{{\lws}}   	& 0.291$\pm$0.010$\bullet$ 	& 0.310$\pm$0.006$\bullet$ 	& 0.237$\pm$0.008$\bullet$ 	& 0.270$\pm$0.007$\bullet$ \\
    	\multicolumn{1}{l|}{{\aggd}} 	& 0.412$\pm$0.008 			& 0.480$\pm$0.005$\bullet$	& 0.358$\pm$0.008$\bullet$	& 0.363$\pm$0.012$\bullet$ \\ \hline  
    	\multicolumn{5}{c}{MaxMin} \\	\hline  
    	\multicolumn{1}{l|}{{\proden}} 	& 0.401$\pm$0.007 			& 0.447$\pm$0.011$\bullet$ 	& 0.265$\pm$0.027$\bullet$ 	& 0.291$\pm$0.011$\bullet$ \\
    	\multicolumn{1}{l|}{{\rc}}    	& 0.227$\pm$0.012$\bullet$ 	& 0.338$\pm$0.010$\bullet$ 	& 0.208$\pm$0.007$\bullet$	& 0.246$\pm$0.008$\bullet$ \\
    	\multicolumn{1}{l|}{{\lws}}   	& 0.299$\pm$0.008$\bullet$ 	& 0.382$\pm$0.009$\bullet$ 	& 0.247$\pm$0.005$\bullet$ 	& 0.230$\pm$0.007$\bullet$ \\
    	\multicolumn{1}{l|}{{\aggd}} 	& 0.460$\pm$0.008$\circ$		& 0.524$\pm$0.008$\circ$ 	& 0.434$\pm$0.009$\bullet$	& 0.285$\pm$0.009$\bullet$ \\
    \hline  \hline
    \end{tabular}
\end{table}

The above results indicate that{\ours} achieves the best performance when combined with stronger bag generators such as CRC-{\scriptsize{MIPL-KMeansSeg}} and CRC-{\scriptsize{MIPL-SIFT}}. This combination enables the disambiguation attention mechanism to learn meaningful embeddings. This aligns with the fact that CRC-{\scriptsize{MIPL-Row}} and CRC-{\scriptsize{MIPL-SBN}} use classic multi-instance bag generators that only consider pixel colors, and lack the ability to extract any of the content information. In contrast, CRC-{\scriptsize{MIPL-KMeansSeg}} and CRC-{\scriptsize{MIPL-SIFT}} are content-aware generators that are capable of producing semantically meaningful features. Both CRC-{\scriptsize{MIPL-Row}} and CRC-{\scriptsize{MIPL-SBN}} segment images using fixed grids, and represents instances based on their pixel-level colors and the colors of their adjacent rows or grids. Consequently, instances in CRC-{\scriptsize{MIPL-Row}} and CRC-{\scriptsize{MIPL-SBN}} exhibit similar feature representations, and possess limited discriminative power when distinguishing positive and negative instances. With more powerful bag generators such as CRC-{\scriptsize{MIPL-KMeansSeg}} and CRC-{\scriptsize{MIPL-SIFT}}, which generate content-aware features that are more informative and discriminative, the disambiguation power of {\ours} can be fully utilized as demonstrated by the significant performance advantages against all compared baselines.

Furthermore, the CRC-{\scriptsize{MIPL}} dataset exhibits distinct differences between tissue cells and the background in each image. The Mean strategy diminishes the disparities and discriminations, leading to superior outcomes for the Maxmin strategy in most cases when compared to the Mean strategy.

\subsection{Further Analysis}
\begin{table}[!t]
\setlength{\abovecaptionskip}{-0cm} 
\setlength{\belowcaptionskip}{0.05cm} 
\centering \footnotesize
\caption{Classification accuracy (mean$\pm$std) of {\oursnoatt} and {\ours}.}
\begin{tabular}{p{1.76cm} | p{1.52 cm} p{1.52 cm}  p{1.52 cm} |p{1.52 cm} p{1.52 cm} p{1.52 cm}}
    \hline  \hline
    \multirow{2}[1]{*}{Algorithm}	& \multicolumn{3}{c|}{FMNIST-{\scriptsize{MIPL}}} 	& \multicolumn{3}{|c}{SIVAL-{\scriptsize{MIPL}}}  \\ \cline{2-7} 
						& \multicolumn{1}{c}{$r=1$}		& \multicolumn{1}{c}{$r=2$}		& \multicolumn{1}{c|}{$r=3$}									& \multicolumn{1}{c}{$r=1$}		& \multicolumn{1}{c}{$r=2$}		& \multicolumn{1}{c}{$r=3$}	\\ \hline
    	{\oursnoatt}  			& 0.744$\pm$0.273 				& 0.784$\pm$0.018  				& 0.586$\pm$0.101 	
						& 0.607$\pm$0.024				& 0.530$\pm$0.021 				& 0.499$\pm$0.035\\
     	{\ours}				& 0.881$\pm$0.021 				& 0.823$\pm$0.028 				& 0.657$\pm$0.025 	
						& 0.635$\pm$0.041 				& 0.554$\pm$0.051 				& 0.503$\pm$0.018 \\
    \hline  \hline
    \end{tabular}
  \label{tab:attention}
\end{table}

\begin{figure}[!t]
\centering
\setlength{\abovecaptionskip}{0.cm}
\begin{overpic}[width=1.\textwidth]{./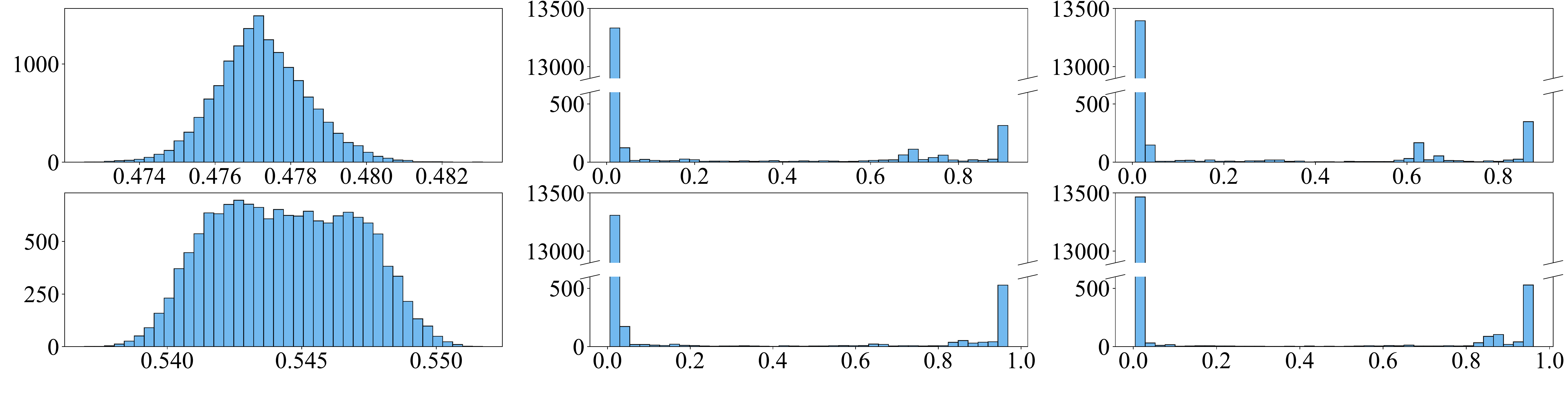}
\put(-10, 35) {\small \rotatebox{90}{frequency}}
\put(-10, 155) {\small \rotatebox{90}{frequency}}
\put(57, -10) {\small attention score (epoch 10)}
\put(398, -10) {\small attention score (epoch 50)}
\put(726, -10) {\small attention score (epoch 100)}
\put(45, 110) {\scriptsize {\ours}}
\put(45, 227) {\scriptsize {\oursnoatt}}
\end{overpic}
\caption{The frequency distribution of attention scores on MNIST-{\scriptsize{MIPL}} dataset ($r=1$).}
 \label{fig:att_score}
\end{figure}

\paragraph{Effectiveness of the Attention Loss} To validate the effectiveness of the attention loss, we introduce a degenerated variant named {\oursnoatt}, which excludes the attention loss from {\ours}. Table \ref{tab:attention} verifies that {\ours} achieves superior accuracy compared to {\oursnoatt} on both the FMNIST-{\scriptsize{MIPL}} and SIVAL-{\scriptsize{MIPL}} datasets. Notably, the difference is more pronounced on the FMNIST-{\scriptsize{MIPL}} dataset than that on the SIVAL-{\scriptsize{MIPL}} dataset. This can be attributed to the fact that the feature representation of each instance in the FMNIST-{\scriptsize{MIPL}} dataset solely comprises self-contained information, enabling clear differentiation between positive and negative instances. Conversely, the feature representation of each instance in the SIVAL-{\scriptsize{MIPL}} dataset encompasses both self and neighboring information, leading to couplings between the feature information of positive instances and negative instances. 

To further investigate the scores learned by the attention loss, we visualize the frequency distribution of attention scores throughout the training process. As illuminated in Figure \ref{fig:att_score}, the top row corresponds to {\oursnoatt}, while the bottom row corresponds to {\ours}. At epoch=$10$, attention scores generated by {\ours} show higher dispersion, suggesting that {\ours} trains faster than {\oursnoatt}. At epoch=$50$ and $100$, attention scores computed by {\ours} tend to converge towards two extremes: attention scores for negative instances gravitate towards zero, while attention scores for positive instances approach one. In conclusion, the proposed attention loss accurately assigns attention scores to positive and negative instances, thereby improving classification accuracy.

\paragraph{Effectiveness of the Momentum-based Disambiguation Strategy} To further investigate the momentum-based disambiguation strategy, the performance of {\ours} is compared with its two degenerated versions denoted as {\ourspro} and {\oursavg}. {\ourspro} is obtained by setting the momentum parameter $\lambda^{(t)}=0$ in Equation (\ref{eq:update_w}), which corresponds to progressively updating the weights based on the current output of the classifier. In contrast, {\oursavg} is obtained by setting the momentum parameter $\lambda^{(t)}=1$, resulting in uniform weights throughout the training process. 

Figure \ref{fig:md_acc} illustrates the performance comparison among {\ours}, {\ourspro}, and {\oursavg} on the MNIST-{\scriptsize{MIPL}}, FMNIST-{\scriptsize{MIPL}}, and Birdsong-{\scriptsize{MIPL}} datasets. When the number of false positive labels is small, {\ourspro} and {\oursavg} demonstrate similar performance to {\ours}. However, as the number of false positive labels increases, {\ours} consistently outperforms {\ourspro} and {\oursavg} by a significant margin. This observation suggests that the momentum-based disambiguation strategy is more robust in handling higher levels of disambiguation complexity. Furthermore, it can be observed that {\ourspro} generally outperforms {\oursavg} across various scenarios. However, when $r=3$ in the MNIST-{\scriptsize{MIPL}} and FMNIST-{\scriptsize{MIPL}} datasets, {\oursavg} surpasses {\ourspro}. We believe this can be attributed to the following reason: having three false positive labels within the context of five classifications represents an extreme case. {\ourspro} likely assigns higher weights to false positive labels, whereas {\oursavg} uniformly assigns weights to each candidate label, adopting a more conservative approach to avoid assigning excessive weights to false positive labels. In a nutshell, the proposed momentum-based disambiguation strategy demonstrates superior robustness compared to existing methods for disambiguation. 
\begin{figure}[!t] 
\centering
\begin{overpic}[width=1.\textwidth]{./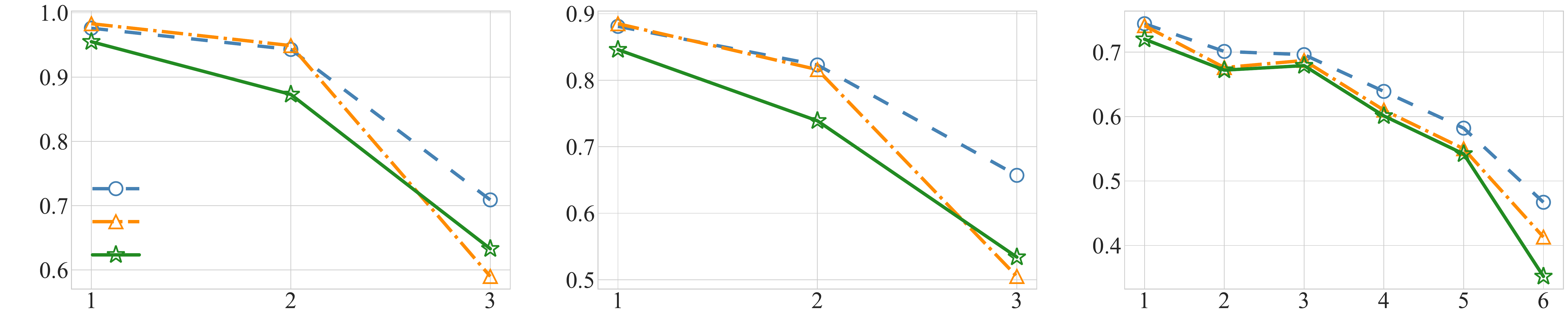}
\put(0, 76) {\small \rotatebox{90}{accuracy}}
\put(50, -16) {\small $r$ ($\#$ the false positive labels)}
\put(388, -16) {\small $r$ ($\#$ the false positive labels)}
\put(723, -16) {\small $r$ ($\#$ the false positive labels)}
\put(96, 84) {\scriptsize {\ours}}
\put(96, 62) {\scriptsize {\ourspro}}
\put(96, 42) {\scriptsize {\oursavg}}
\put(130, 210) {\footnotesize MNIST-{\scriptsize{MIPL}}}
\put(455, 210) {\footnotesize FMNIST-{\scriptsize{MIPL}}}
\put(790, 210) {\footnotesize Birdsong-{\scriptsize{MIPL}}}
\end{overpic}
\caption{Classication accuracy of {\ours}, {\ourspro}, and {\oursavg} with varying $r$.}
 \label{fig:md_acc}
\end{figure}

\paragraph{Parameter Sensitivity Analysis}
The weight $\lambda_a$ in Equation (\ref{eq:full_loss}) serves as the primary hyperparameter in {\ours}. Figure \ref{fig:parameter_analysis} illustrates the sensitivity analysis of the weight $\lambda_a$ on the MNIST-{\scriptsize{MIPL}} and CRC-{\scriptsize{MIPL-SIFT}} datasets. The learning rates on the MNIST-{\scriptsize{MIPL}} dataset are set to $0.01$, $0.01$, $0.05$ for $r=1$, $2$, $3$, respectively, while on the CRC-{\scriptsize{MIPL-SIFT}} dataset, the learning rate is set to $0.01$. As illuminated in Figure \ref{fig:parameter_analysis}, {\ours} demonstrates insensitivity to changes in the weight $\lambda_a$. In the experiments involving {\ours} and its variants, the weight $\lambda_a$ is chosen from a set of $\{0.0001, 0.001\}$.

\begin{figure}[!t]
\centering
\begin{overpic}[width=1.\textwidth]{./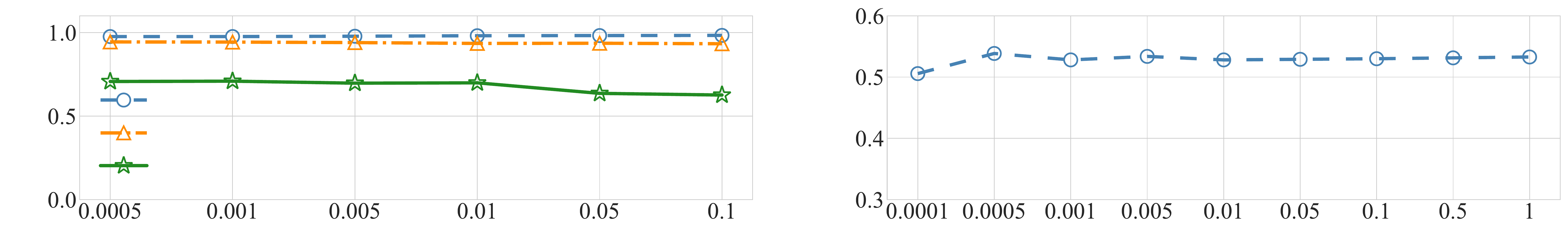}
\put(10, 32) {\small \rotatebox{90}{accuracy}}
\put(255, -18) {\small $\lambda_a$}
\put(765, -18) {\small $\lambda_a$}
\put(100, 76) {\scriptsize {$r=1$}}
\put(100, 54) {\scriptsize {$r=2$}}
\put(100, 34) {\scriptsize {$r=3$}}
\put(205, 140) {\footnotesize MNIST-{\scriptsize{MIPL}}}
\put(710, 140) {\footnotesize CRC-{\scriptsize{MIPL-SIFT}}}
\end{overpic}
\caption{Performance of {\ours} with varying weight $\lambda_a$.}
\label{fig:parameter_analysis}
\end{figure}

\section{Conclusion}
\label{sec:con}
In this paper, we propose {\ours}, the first deep learning-based algorithm for multi-instance partial-label learning, accompanied by a real-world dataset. Specifically,  {\ours} utilizes the disambiguation attention mechanism to aggregate each multi-instance bag into a single vector representation, which is further used in conjunction with the momentum-based disambiguation strategy to determine the ground-truth label from the candidate label set. The disambiguation attention mechanism and momentum-based strategy synergistically facilitate disambiguation in both the instance space and label space. Extensive experimental results indicate that {\ours} outperforms the compared algorithms in $96.3\%$ of cases on benchmark datasets and $88.6\%$ of cases on the real-world dataset. 

Despite {\ours}'s superior performance compared to the well-established MIPL and PLL approaches, it exhibits certain limitations and there are several unexplored research avenues. For example, {\ours} assumes independence among instances within each bag. A promising avenue for future research involves considering dependencies between instances. Moreover, akin to MIL algorithms grounded in the embedded-space paradigm \cite{IlseTW18}, accurately predicting instance-level labels poses a challenging endeavor. One possible approach entails the introduction of an instance-level classifier.

\bibliographystyle{unsrtnat}
\bibliography{myref}

%%%%%%%%%%%%%%%%%%%%%%%%%%%%%%%%%%%%%%%%%%%%%%%%%%%%%%%%%%%%
\clearpage
\appendix

\setcounter{figure}{0}
\setcounter{table}{0}
\section{Appendix}
\subsection{Pseudo-Code of {\ours}}
Given an unseen multi-instance bag $\bs{X}_{*} = [\bs{x}_{*,1}, \bs{x}_{*,2},\cdots,$ $ \bs{x}_{*,n_*}]$ with $n_*$ instances, {\ours} initially utilizes the feature extractor to obtain the instance-level representation as follows:
\begin{equation}
\label{eq:extractor_test}
	\boldsymbol{H}_{*} = h(\boldsymbol{X}_{*}) = \{\boldsymbol{h}_{*,1}, \boldsymbol{h}_{*,2}, \cdots, \boldsymbol{h}_{*,n_*}\}.
\end{equation}
Subsequently, {\ours} maps the instance-level representation $\boldsymbol{H}_{*}$ into a vector representation $\boldsymbol{z}_*$ using the disambiguation attention mechanism, which is described as follows:
\begin{equation}
\label{eq:attention_test}
	a_{*,j} = \frac{1}{1 + \exp \left\lbrace - \boldsymbol{W}^\top \left(\text{tanh}\left(\boldsymbol{W}_{v}^\top \boldsymbol{h}_{*,j} + \boldsymbol{b}_v \right) \odot \text{sigm}\left(\boldsymbol{W}_{u}^\top \boldsymbol{h}_{*,j} + \boldsymbol{b}_u \right)\right) \right\rbrace},
\end{equation}
\begin{equation}
\label{eq:aggregation_test}
	\boldsymbol{z}_* = \frac{1}{\sum_{j=1}^{n_*} a_{*,j}} \sum_{j=1}^{n_*} a_{*,j} \boldsymbol{h}_{*,j}.
\end{equation}
Finally, we employ the trained classifier $\bf{f}$ to predict the label of $\bs{X}_{*}$ by:
\begin{equation}
\label{eq:pre_label}
	Y_{*} = \underset{c \in \mathcal{Y}}{\arg \max}~f_c(\bs{z}_*),
\end{equation}
where $f_c(\cdot)$ is the $c$-th element of $\bf{f}(\cdot)$, and the normalized function $f_c(\cdot)$ represents the probability of class label $c$ being the ground-truth label.

\begin{algorithm}[!b]	
\caption{$Y_* =$ {\ours} $(\al{D}$, $\lambda_a$, $T$, $\bs{X}_{*})$ }
\label{alg:algorithm}
\textbf{Inputs}: \\
$\al{D}$ : the multi-instance partial-label training set $\{(\bs{X}_i,\bs{S}_i)\mid 1 \le i \le m \}$, where $\boldsymbol{X}_i = \{\boldsymbol{x}_{i,1}, \boldsymbol{x}_{i,2}, \cdots, \boldsymbol{x}_{i,n_i}\}$, $\bs{x}_{i,j} \in \al{X}$, $\al{X}=\bb{R}^d$, 
$\bs{S}_i  \subset \mathcal{Y}$,  $\al{Y}=\{l_1, l_2, \cdots, l_q\}$\\
$\lambda_a$ : the weight for the attention loss \\
$T$: the number of iterations \\
$\bs{X}_{*}$: the unseen multi-instance bag with $n_*$ instances\\
\textbf{Outputs}: \\
$\bs{Y}_{*}$ : the predicted label for $\bs{X}_{*}$ \\
\textbf{Process}: 
\begin{algorithmic}[1]  
\STATE Initialize uniform weights $\bs{w}^{(0)}$ as stated by Equation (\ref{eq:init_w})
\FOR{$t=1$ to $T$}
    \STATE Shuffle training set $\mathcal{D}$ into $B$ mini-batches
    \FOR{$b=1$ to $B$}
        \STATE Extract the instance-level features of $\bs{X}$ according to Equation (\ref{eq:extractor}) \\
	\STATE Calculate the attention scores as stated by Equation (\ref{eq:attention})  \\
	\STATE Map the instance-level features into a single vector representation according to Equation (\ref{eq:aggregation})	\\
        \STATE Update weights $\bs{w}^{(t)}$ according to Equation (\ref{eq:update_w}) \\
        \STATE Calculate $\mathcal{L}$ according to Equation (\ref{eq:full_loss})         
        \STATE Set gradient $- \bigtriangledown_{\Phi} \mathcal{L}$ \\
	\STATE Update $\Phi$ by the optimizer \\
    \ENDFOR
\ENDFOR
\STATE Extract the instance-level features of $\bs{X}_*$ according to Equation (\ref{eq:extractor_test}) \\
\STATE Calculate the attention scores and map the instance-level features into a single vector representation according to Equations (\ref{eq:attention_test}) and (\ref{eq:aggregation_test}) \\ 
\STATE Return $Y_*$ according to Equation (\ref{eq:pre_label})
\end{algorithmic}
\end{algorithm}

Algorithm \ref{alg:algorithm} summarizes the complete procedure of {\ours}. First, the algorithm uniformly initializes the weights of the momentum-based disambiguation loss (Step $1$). Next, the model training can be divided into two sub-steps (Steps $2$-$13$). The initial sub-step involves extracting features for each mini-batch and aggregating them into bag-level vector representations (Steps $5$-$7$). The subsequent sub-step encompasses calculating the loss function and updating the model (Steps $8$-$11$). Finally, for an unseen multi-instance bag, instance-level features are extracted and aggregated into a bag-level vector representation, which is used to predict the label (Steps $14$-$16$).

\subsection{Additional Experiment Results}
\begin{table}[!ht]
\setlength{\abovecaptionskip}{-0cm} 
\setlength{\belowcaptionskip}{0.1cm} 
  \centering  \footnotesize
   \caption{Classification accuracy (mean$\pm$std) of each comparing algorithm on the benchmark datasets in terms of the different number of false positive labels [$r \in \{1,2,3\}$]. $\bullet/\circ$ indicates whether the performance of {\ours} is statistically superior/inferior to the compared algorithm on each dataset (pairwise t-test at a significance level of $0.05$).}
  \label{tab:res_mlp}
    \begin{tabular}{l|c|cccc}
    \hline  \hline
    	\multicolumn{1}{l|}{Algorithm} & $r$ & MNIST-{\scriptsize{MIPL}} & FMNIST-{\scriptsize{MIPL}} & Birdsong-{\scriptsize{MIPL}} & SIVAL-{\scriptsize{MIPL}} \\	\hline
    	\multicolumn{1}{l|}{\multirow{3}[1]{*}{{\ours}}}
	& 1     	& 0.976$\pm$0.008 		& 0.881$\pm$0.021 & 0.744$\pm$0.016 & 0.635$\pm$0.041 \\
	& 2     	& 0.943$\pm$0.027 		& 0.823$\pm$0.028 & 0.701$\pm$0.024 & 0.554$\pm$0.051 \\
	& 3     	& 0.709$\pm$0.088 		& 0.657$\pm$0.025 & 0.696$\pm$0.024 & 0.503$\pm$0.018 \\	\hline
    	\multicolumn{6}{c}{Mean} \\	\hline
    	\multicolumn{1}{l|}{\multirow{3}[1]{*}{{\proden}}}  
	& 1     	& 0.555$\pm$0.033$\bullet$  	& 0.652$\pm$0.033$\bullet$  	& 0.303$\pm$0.016$\bullet$  	& 0.303$\pm$0.020$\bullet$  \\
	& 2     	& 0.372$\pm$0.038$\bullet$  	& 0.463$\pm$0.067$\bullet$  	& 0.287$\pm$0.017$\bullet$  	& 0.274$\pm$0.022$\bullet$  \\
	& 3     	& 0.285$\pm$0.032$\bullet$  	& 0.288$\pm$0.039$\bullet$  	& 0.278$\pm$0.006$\bullet$  	& 0.242$\pm$0.009$\bullet$  \\	\hline
    	\multicolumn{1}{l|}{\multirow{3}[1]{*}{{\rc}}} 
	& 1     	& 0.660$\pm$0.031$\bullet$  	& 0.697$\pm$0.166$\bullet$  	& 0.329$\pm$0.014$\bullet$  	& 0.344$\pm$0.014$\bullet$  \\
	& 2     	& 0.577$\pm$0.039$\bullet$  	& 0.684$\pm$0.029$\bullet$ 	& 0.301$\pm$0.014$\bullet$  	& 0.299$\pm$0.015$\bullet$  \\
	& 3     	& 0.362$\pm$0.029$\bullet$  	& 0.414$\pm$0.050$\bullet$  	& 0.288$\pm$0.019$\bullet$  	& 0.256$\pm$0.013$\bullet$  \\	\hline
   	\multicolumn{1}{l|}{\multirow{3}[1]{*}{{\lws}}} 
	& 1     	& 0.605$\pm$0.030$\bullet$  	& 0.702$\pm$0.033$\bullet$  	& 0.344$\pm$0.018$\bullet$  	& 0.346$\pm$0.014$\bullet$  \\
	& 2     	& 0.431$\pm$0.024$\bullet$  	& 0.547$\pm$0.040$\bullet$  	& 0.310$\pm$0.014$\bullet$  	& 0.312$\pm$0.015$\bullet$  \\
	& 3     	& 0.335$\pm$0.029$\bullet$  	& 0.411$\pm$0.033$\bullet$  	& 0.289$\pm$0.021$\bullet$  	& 0.286$\pm$0.018$\bullet$  \\	\hline
    	\multicolumn{6}{c}{MaxMin} \\	\hline
    	\multicolumn{1}{l|}{\multirow{3}[1]{*}{{\proden}}}  
	& 1     	& 0.465$\pm$0.023$\bullet$  	& 0.358$\pm$0.019$\bullet$  	& 0.339$\pm$0.010$\bullet$  	& 0.322$\pm$0.018$\bullet$  \\
	& 2     	& 0.338$\pm$0.031$\bullet$  	& 0.315$\pm$0.023$\bullet$  	& 0.329$\pm$0.016$\bullet$  	& 0.295$\pm$0.021$\bullet$  \\
	& 3     	& 0.260$\pm$0.037$\bullet$  	& 0.265$\pm$0.031$\bullet$  	& 0.305$\pm$0.015$\bullet$  	& 0.244$\pm$0.018$\bullet$  \\	\hline
    	\multicolumn{1}{l|}{\multirow{3}[1]{*}{{\rc}}} 
	& 1     	& 0.518$\pm$0.022$\bullet$  	& 0.421$\pm$0.016$\bullet$  	& 0.379$\pm$0.014$\bullet$  	& 0.304$\pm$0.015$\bullet$  \\
	& 2     	& 0.462$\pm$0.028$\bullet$  	& 0.363$\pm$0.018$\bullet$  	& 0.359$\pm$0.015$\bullet$  	& 0.268$\pm$0.023$\bullet$  \\
	& 3     	& 0.366$\pm$0.039$\bullet$  	& 0.294$\pm$0.053$\bullet$  	& 0.332$\pm$0.024$\bullet$  	& 0.244$\pm$0.014$\bullet$  \\	\hline
    	\multicolumn{1}{l|}{\multirow{3}[1]{*}{{\lws}}} 
	& 1     	& 0.457$\pm$0.028$\bullet$  	& 0.346$\pm$0.033$\bullet$  	& 0.349$\pm$0.013$\bullet$  	& 0.345$\pm$0.013$\bullet$  \\
	& 2     	& 0.351$\pm$0.043$\bullet$  	& 0.323$\pm$0.031$\bullet$	& 0.336$\pm$0.013$\bullet$  	& 0.314$\pm$0.019$\bullet$  \\
	& 3     	& 0.274$\pm$0.037$\bullet$  	& 0.267$\pm$0.034$\bullet$	& 0.307$\pm$0.016$\bullet$  	& 0.268$\pm$0.019$\bullet$  \\
     \hline  \hline
    \end{tabular}
\end{table}

\begin{table}[!ht]
\setlength{\abovecaptionskip}{-0cm} 
\setlength{\belowcaptionskip}{0.1cm} 
  \centering  \footnotesize
   \caption{Classification accuracy (mean$\pm$std) of each comparing algorithm on the real-world dataset.} 
  \label{tab:res_real_mlp}
   \begin{tabular}{lccccc}
    \hline  \hline
    	\multicolumn{1}{l|}{Algorithm}  	& CRC-{\scriptsize{MIPL-Row}}   & CRC-{\scriptsize{MIPL-SBN}}   & CRC-{\scriptsize{MIPL-KMeansSeg}}  & CRC-{\scriptsize{MIPL-SIFT}} \\		\hline
    	\multicolumn{1}{l|}{{\ours}}  	& 0.408$\pm$0.010 	& 0.486$\pm$0.014 	& 0.521$\pm$0.012 	& 0.532$\pm$0.013 \\	\hline  
    	\multicolumn{6}{c}{Mean} \\	\hline
    	\multicolumn{1}{l|}{{\proden}} 	& 0.405$\pm$0.012 			& 0.515$\pm$0.010$\circ$		& 0.512$\pm$0.014$\bullet$	& 0.352$\pm$0.015$\bullet$ \\
    	\multicolumn{1}{l|}{{\rc}}    		& 0.290$\pm$0.010$\bullet$ 	& 0.394$\pm$0.010$\bullet$ 	& 0.304$\pm$0.017$\bullet$ 	& 0.248$\pm$0.008$\bullet$ \\
    	\multicolumn{1}{l|}{{\lws}}   	& 0.360$\pm$0.008$\bullet$ 	& 0.440$\pm$0.009$\bullet$ 	& 0.422$\pm$0.035$\bullet$ 	& 0.338$\pm$0.009$\bullet$ \\		\hline  
    	\multicolumn{6}{c}{MaxMin} \\	\hline  
    	\multicolumn{1}{l|}{{\proden}} 	& 0.453$\pm$0.009$\circ$ 	& 0.529$\pm$0.010$\circ$		& 0.563$\pm$0.011$\circ$		& 0.294$\pm$0.008$\bullet$ \\
    	\multicolumn{1}{l|}{{\rc}}    		& 0.347$\pm$0.013$\bullet$ 	& 0.432$\pm$0.008$\bullet$ 	& 0.366$\pm$0.010$\bullet$ 	& 0.204$\pm$0.008$\bullet$ \\
    	\multicolumn{1}{l|}{{\lws}}   	& 0.381$\pm$0.011$\bullet$ 	& 0.442$\pm$0.009$\bullet$ 	& 0.335$\pm$0.049$\bullet$ 	& 0.287$\pm$0.009$\bullet$ \\
    \hline  \hline
    \end{tabular}
\end{table}

In Section \ref{subsec:res_benchmark} and Section \ref{subsec:res_real}, we report the results of {\proden}, {\rc}, and {\lws} using linear models. In this section, we supplement the results of the compared PLL algorithms with multi-layer perceptrons (MLPs)  as described in the respective literature. Table \ref{tab:res_mlp} and Table \ref{tab:res_real_mlp} present the experimental results of {\ours} compared to PLL algorithms on the benchmark and real-world datasets, respectively. It is noteworthy that {\ours} employs a two-layer CNN network for feature extraction on the MNIST-{\scriptsize{MIPL}} and FMNIST-{\scriptsize{MIPL}} datasets, while on the remaining datasets, {\ours} only utilizes linear models. 

On the benchmark datasets, {\ours} consistently outperforms the compared algorithms in almost all cases. Moreover, the compared algorithms using MLPs do not consistently yield superior results compared to those using linear models, especially when the benchmark datasets exhibit relatively simple features. This suggests that linear models are sufficient to achieve satisfactory results given the benchmark datasets, while MLPs might introduce unnecessary complexity.

On the real-world dataset, {\ours} outperforms the compared algorithms in $19$ out of $24$ cases. When combined with complex image bag generators such as CRC-{\scriptsize{MIPL-KMeansSeg}} and CRC-{\scriptsize{MIPL-SIFT}}, {\ours} outperforms the compared algorithms in $11$ out of $12$ cases. In the majority of cases, the compared algorithms using MLPs demonstrate better performance than those using linear models. However, on the CRC-{\scriptsize{MIPL-SIFT}} dataset, the improvement provided by MLPs is not particularly evident and sometimes even leads to a decline in performance. Therefore, when dealing with complex multi-instance features, the bag features obtained through the Mean or MaxMin strategies do not accurately reflect the characteristics of multi-instance bags. This highlights the need for specialized MIPL algorithms to accurately capture the features of multi-instance bags.

\subsection{Theoretical Analysis}
\begin{theorem}
In a multi-instance bag $\boldsymbol{X}_i$, when the normalized attention score of an instance $\boldsymbol{x}_{i,j^\prime}$ approaches $1$, e.g., $\frac{a_{i,j^\prime}}{\sum_{j=1}^{n_i} a_{i,j}} \to 1$, the probability of the multi-instance bag $\boldsymbol{X}_i$ being classified as the $c$-th class is approximately equal to that of the instance $\boldsymbol{x}_{i,j^\prime}$ belonging to the $c$-th class. 
\end{theorem}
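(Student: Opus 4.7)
The plan is to reduce the claim to a continuity argument applied to the aggregation formula in Equation~(\ref{eq:aggregation}). Write $\tilde a_{i,j} = a_{i,j}/\sum_{k=1}^{n_i} a_{i,k}$ for the normalized attention weights. Since each $a_{i,j}\in(0,1)$ by Equation~(\ref{eq:attention}) (sigmoid output), the $\tilde a_{i,j}$ are non-negative and sum to $1$, so they define a convex combination. The first step is to observe that if $\tilde a_{i,j'}\to 1$, then necessarily $\sum_{j\neq j'} \tilde a_{i,j}\to 0$, which forces $\tilde a_{i,j}\to 0$ for every $j\neq j'$ by non-negativity.

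Second, I would plug this into Equation~(\ref{eq:aggregation}), rewritten as $\boldsymbol{z}_i = \sum_{j=1}^{n_i} \tilde a_{i,j}\,\boldsymbol{h}_{i,j}$, and bound the discrepancy $\|\boldsymbol{z}_i - \boldsymbol{h}_{i,j'}\|$ by $\sum_{j\neq j'} \tilde a_{i,j}\,\|\boldsymbol{h}_{i,j} - \boldsymbol{h}_{i,j'}\| + (1-\tilde a_{i,j'})\cdot 0$, where each term vanishes in the limit because the instance features $\{\boldsymbol{h}_{i,j}\}$ are fixed vectors in $\mathbb{R}^{d'}$ (hence have finite norms). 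This yields $\boldsymbol{z}_i \to \boldsymbol{h}_{i,j'}$ as $\tilde a_{i,j'}\to 1$.

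Third, since the classifier $f_c$ is the composition of linear layers with standard continuous activations (and finally a softmax giving class-probability outputs), $f_c$ is continuous on $\mathbb{R}^{d'}$. Continuity then transfers the limit from the input to the output: $f_c(\boldsymbol{z}_i)\to f_c(\boldsymbol{h}_{i,j'})$. The final step is to interpret both sides: $f_c(\boldsymbol{z}_i)$ is by construction the predicted probability that bag $\boldsymbol{X}_i$ belongs to class $c$, whereas $f_c(\boldsymbol{h}_{i,j'})$ is the same classifier applied to the instance-level representation of $\boldsymbol{x}_{i,j'}$ obtained via $h(\cdot)$, which is exactly the probability that this instance belongs to class $c$.

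I do not expect any hard step; the proof is essentially a two-line limit. The only point that requires care is making explicit the two standing assumptions used implicitly, namely (i) the non-negativity of attention scores (coming from the sigmoid in Equation~(\ref{eq:attention})), which is what converts ``one weight tends to $1$'' into ``the others tend to $0$'', and (ii) the continuity of the classifier $f_c$ as a function of its input representation, which is what allows us to pass the limit through $f_c$. These should be stated cleanly at the outset so that the resulting chain $\tilde a_{i,j'}\to 1\Rightarrow \boldsymbol{z}_i\to \boldsymbol{h}_{i,j'}\Rightarrow f_c(\boldsymbol{z}_i)\to f_c(\boldsymbol{h}_{i,j'})$ reads as a direct application of the aggregation rule.
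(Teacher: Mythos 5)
Your proposal is correct and takes essentially the same route as the paper's proof: since the normalized scores $\tilde a_{i,j}$ form a convex combination, $\tilde a_{i,j'} \to 1$ forces the remaining weights to $0$, whence $\boldsymbol{z}_i \to \boldsymbol{h}_{i,j'}$ by Equation~(\ref{eq:aggregation}) and the bag-level prediction coincides in the limit with the instance-level one. If anything, your write-up is slightly more complete than the paper's, which stops at $\boldsymbol{z}_i \approx \boldsymbol{h}_{i,j'}$ and leaves implicit the final continuity step $f_c(\boldsymbol{z}_i) \to f_c(\boldsymbol{h}_{i,j'})$ that you state explicitly, together with the explicit bound $\|\boldsymbol{z}_i - \boldsymbol{h}_{i,j'}\| \le \sum_{j \neq j'} \tilde a_{i,j} \|\boldsymbol{h}_{i,j} - \boldsymbol{h}_{i,j'}\|$.
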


\begin{proof}
Equation (\ref{eq:attention}) demonstrates that the attention score for each instance ranges between $0$ and $1$. After normalizing by $\frac{1}{\sum_{j=1}^{n_i} a_{i,j}}$, the sum of attention scores for all instances within a multi-instance bag becomes equal to $1$. When the normalized attention score of an instance $\boldsymbol{x}_{i,j^\prime}$ is approach $1$, the normalized attention scores of the remaining instances $\{\boldsymbol{x}_{i,1}, \boldsymbol{x}_{i,2}, \cdots, \boldsymbol{x}_{i,n_i}\} \setminus \{\boldsymbol{x}_{i,j^\prime}\}$ approach $0$. Based on Equation (\ref{eq:aggregation}), the aggregated bag-level vector representation $\boldsymbol{z}_i = \frac{1}{\sum_{j=1}^{n_i} a_{i,j}} \sum_{j=1}^{n_i} a_{i,j} \boldsymbol{h}_{i,j} \approx \boldsymbol{h}_{i,j^\prime} = h(\boldsymbol{x}_{i,j^\prime})$. Therefore, it is confirmed that instances with higher attention scores contribute significantly to the bag-level predictions, underlining the significance of attention mechanisms in multi-instance partial-label learning.
\end{proof}

Theorem 1 suggests that high attention scores of individual instances can play a crucial role in determining the bag-level class prediction, emphasizing the significance of accurately capturing and interpreting attention scores in multi-instance partial-label learning scenarios.

\subsection{Image Bag Generator}
We utilize four image bag generators to extract multi-instance features from the CRC-{\scriptsize{MIPL}} dataset. The detailed descriptions of these image bag generators are provided below: 
\begin{itemize}
\setlength{\itemsep}{-10pt}
\setlength{\parsep}{0pt}
\setlength{\parskip}{0pt}
	\item \textbf{Row Generator}: It treats each row of the image as an individual instance. To extract the feature for each instance, the Row generator computes the average RGB color value of the row and the color differences in the rows above and below it. \\
	\item \textbf{SBN Generator}: It considers each $2 \times 2$ blob within the image and includes the RGB color values of the blob itself and its four neighboring blobs as features for each instance. It generates instances by iteratively moving one pixel at a time. However, it should be noted that the SBN generator ignores the feature information at the four corners of the image. \\
	\item \textbf{KMeansSeg Generator}: It divides the image into $K$ segments or partition blocks. For each segment, it generates a $6$-dimensional feature. The first three dimensions represent color values in the YCbCr color space, while the last three dimensions represent values obtained by applying the wavelet transform to the luminance (Y) component of the image. \\
	\item \textbf{SIFT Generator}: It applies the scale-invariant feature transform (SIFT) algorithm to extract features, which partitions each instance into multiple $4 \times 4$ subregions and assigns the gradients of the pixels within these subregions to $8$ bins. Consequently, the SIFT generator produces a $128$-dimensional feature vector for each instance. 
\end{itemize}
The implementations of the four image bag generators are available at \url{http://www.lamda.nju.edu.cn/code_MIL-BG.ashx}.

\subsection{Data and Code Availability}
The implementations of the compared algorithms are publicly available. {\miplgp} and {\aggd} are implemented at \url{http://palm.seu.edu.cn/zhangml/}. {\proden} is implemented at \url{https://github.com/Lvcrezia77/PRODEN}. {\rc} is implemented at \url{https://lfeng-ntu.github.io/codedata.html}. {\lws} is implemented at \url{https://github.com/hongwei-wen/LW-loss-for-partial-label}. Additionally, the code of {\ours}, the benchmark datasets, and the real-world dataset are publicly available at  \url{http://palm.seu.edu.cn/zhangml/}.

\end{document}